\newcommand{\prox}[1]{\mathrm{prox}_{#1}}
\newcommand{\zero}{\mathbf{0}}
\newcommand{\argmin}{\mathop{\mathrm{argmin}}}
\newcommand{\inner}[2]{\langle #1, #2 \rangle}
\newcommand{\red}[1]{\color{red} #1 \color{black}}
\newtheorem{assum}{Assumption}
\def\BibTeX{{\rm B\kern-.05em{\sc i\kern-.025em b}\kern-.08em
    T\kern-.1667em\lower.7ex\hbox{E}\kern-.125emX}}
\begin{document}

	
\title{Accelerated Proximal Alternating Gradient-Descent-Ascent for Nonconvex Minimax Machine Learning}

\author{\IEEEauthorblockN{Ziyi Chen}
\IEEEauthorblockA{\textit{Electrical \& Computer Engineering} \\
\textit{University of Utah}\\
Salt Lake City, US \\
u1276972@utah.edu}
\and
\IEEEauthorblockN{Shaocong Ma}
\IEEEauthorblockA{\textit{Electrical \& Computer Engineering} \\
\textit{University of Utah}\\
Salt Lake City, US \\
s.ma@utah.edu}
\and
\IEEEauthorblockN{Yi Zhou}
\IEEEauthorblockA{\textit{Electrical \& Computer Engineering} \\
\textit{University of Utah}\\
Salt Lake City, US \\
yi.zhou@utah.edu}
}

\maketitle              

\begin{abstract}
Alternating gradient-descent-ascent (AltGDA) is an optimization algorithm that has been widely used for model training in various machine learning applications, which aims to solve a nonconvex minimax optimization problem. However, the existing studies show that it suffers from a high computation complexity in nonconvex minimax optimization.  
In this paper, we develop a single-loop and fast AltGDA-type algorithm that leverages proximal gradient updates and momentum acceleration to solve regularized nonconvex minimax optimization problems. 
By leveraging the momentum acceleration technique, we prove that the algorithm converges to a critical point in nonconvex minimax optimization and achieves a computation complexity in the order of $\mathcal{O}(\kappa^{\frac{11}{6}}\epsilon^{-2})$, where $\epsilon$ is the desired level of accuracy and $\kappa$ is the problem's condition number. {Such a computation complexity improves the state-of-the-art complexities of single-loop GDA and AltGDA algorithms (see the summary of comparison in \Cref{table1})}. We demonstrate the effectiveness of our algorithm via an experiment on adversarial deep learning.

\end{abstract}

\begin{IEEEkeywords}
Minimiax optimization, alternating gradient descent ascent, proximal gradient, momentum, complexity.
\end{IEEEkeywords}

\section{Introduction}
Minimax optimization is an emerging and important optimization framework that covers a variety of modern machine learning applications. Some popular application examples include generative adversarial networks (GANs) \cite{goodfellow2014generative}, adversarial machine learning \cite{Sinha2018CertifyingSD}, game theory \cite{ferreira2012minimax}, reinforcement learning \cite{qiu2020single}, etc. A standard minimax optimization problem is written as $\min_{x\in \mathbb{R}^{m}}\max_{y\in \mathcal{Y}}~ f(x,y)$, where $f$ is a differentiable bivariate function.
A basic algorithm for solving the above minimax optimization problem is the gradient-descent-ascent (GDA), which simultaneously performs gradient descent update and gradient ascent update on the variables $x$ and $y$, respectively, i.e., $x_{t+1}=x_t - \eta_{x}\nabla_1 f(x_t,y_t)$, $y_{t+1}=y_t + \eta_{y}\nabla_2 f({x_t}, y_t)$. Here, $\nabla_1$ and $\nabla_2$ denote the gradient operator with regard to the first and the second variable, respectively.
The convergence rate of GDA has been studied under various types of geometries of the minimax problem, including strongly-convex-strongly-concave geometry \cite{fallah2020optimal}, nonconvex-(strongly)-concave geometry \cite{lin2019gradient} and {L}ojasiewicz-type geometry \cite{chen2021}.
Recently, by leveraging the popular momentum technique \cite{Nesterov2014,Tseng2010,Beck2009,Ghadimi2016b,Li2015,Li2017} for accelerating gradient-based algorithms, accelerated variants of GDA have been proposed for strongly-convex-strongly-concave  \cite{zhang2020suboptimality} and nonconvex-strongly-concave \cite{huang2021efficient} minimax optimization. There are other accelerated GDA-type algorithms that achieve a near-optimal complexity \cite{Lin2020a,zhang2021complexity}, but they involve complex nested-loop structures and require function smoothing with many fine-tuned hyper-parameters, which are not used in practical minimax machine learning.

Another important variant of GDA that has been widely used in practical training of minimax machine learning is the alternating-GDA (AltGDA) algorithm, which updates the two variables $x$ and $y$ alternatively via $x_{t+1}=x_t - \eta_{x}\nabla_1 f(x_t,y_t)$, $y_{t+1}=y_t + \eta_{y}\nabla_2 f({x_{t+1}}, y_t)$. Compared with the update of GDA, the $y$-update of AltGDA uses the fresh $x_{t+1}$ instead of the previous $x_t$, and it is shown to converge faster than the standard GDA algorithm \cite{boct2020alternating,bailey2020finite,gidel2019negative,xu2020unified}. 
Despite the popularity of the AltGDA algorithm, it is shown to suffer from a high computation complexity in nonconvex minimax optimization. 
Therefore, this study aims to improve the complexity of AltGDA by leveraging momentum acceleration techniques. In particular, in the existing literature, the convergence of momentum accelerated AltGDA is only established for convex-concave minimax problems \cite{yadav2018stabilizing} and bilinear minimax problems \cite{gidel2019negative}, and it has not been explored in nonconvex minimax optimization that covers modern machine learning applications.
Therefore, this study aims to fill in this gap by developing a single-loop proximal-AltGDA algorithm with momentum acceleration for solving a class of regularized nonconvex minimax problems, and analyze its convergence and computation complexity. 

\subsection{Our Contribution}
We are interested in a class of regularized nonconvex-strongly-concave minimax optimization problems, where the regularizers are convex functions that can be possibly non-smooth. To solve this class of minimax problems, we propose a single-loop proximal-AltGDA with momentum algorithm (referred to as proximal-AltGDAm). The algorithm takes single-loop updates that consist of a proximal gradient descent update with the heavy-ball momentum, and a proximal gradient ascent update with the Nesterov's momentum. Our algorithm extends the applicability of the conventional momentum acceleration schemes (heavy-ball and Nesterov's momentum) for nonconvex minimization to nonconvex minimax optimization. 


We study the convergence property of Proximal-AltGDAm. Specifically, under standard smoothness assumptions on the objective function and by viewing the accelerated alternating GDA updates as inexact accelerated gradient updates, we develop an analysis to show that every limit point of the parameter sequences generated by the algorithm is a critical point of the nonconvex regularized minimax problem. Moreover, to achieve an $\epsilon$-accurate critical point, the overall computation complexity (i.e., number of gradient {and proximal mapping} evaluations) is of the order $\mathcal{O}\big(\kappa^{\frac{11}{6}}\epsilon^{-2}\big)$, where $\kappa>1$ is the condition number of the problem. Thanks to momentum acceleration and a tight analysis in our technical proof, such a computation complexity is lower than that of the existing single-loop GDA-type algorithms. See Table \ref{table1} for a summary of comparison of the computation complexities and Appendix \ref{supp: table} for their derivation. 





\begin{table}[H]
\caption{Comparison of computation complexity (number of gradient {and proximal mappings} evaluations) of the existing single-loop GDA-type algorithms in nonconvex-strongly-concave minimax optimization,  where $\kappa\ge 1$ is the problem condition number.}
\begin{center}
\begin{small}
\begin{tabular}{ccccc}
\hline
 & Alternating  & Momentum & Computation \\ 
 & update & acceleration & complexity \\ \hline
(Chen, et.al) \cite{chen2021} & $\times$ & $\times$ & $\mathcal{O}(\kappa^6\epsilon^{-2})$ \\ \hline
(Huang, et.al) \cite{huang2021efficient} & $\times$ & \checkmark & $\mathcal{O}(\kappa^3\epsilon^{-2})$ \\ \hline
(Lin, et.al) \cite{lin2019gradient} & $\times$ & $\times$ & $\mathcal{O}(\kappa^2\epsilon^{-2})$ \\ \hline
(Xu, et.al) \cite{xu2020unified} & \checkmark & $\times$ & $\mathcal{O}(\kappa^5\epsilon^{-2})$ \\ \hline
(Bo{\c{t}} and B{\"o}hm) \cite{boct2020alternating} & \checkmark & $\times$ & $\mathcal{O}(\kappa^2\epsilon^{-2})$ \\ \hline
\textbf{\red{This work}} & \checkmark  & \checkmark  & $\mathcal{O}(\kappa^{\frac{11}{6}}\epsilon^{-2})$ \\ \hline
\end{tabular} \label{table1}
\end{small}
\end{center}
\end{table}

\subsection{Other Related Work}\label{sec: related_work}
\noindent\textbf{GDA-type algorithms: } 
\cite{nedic2009subgradient} studied a slight variant of GDA by replacing gradients with subgradients for convex-concave non-smooth minimax optimization. \cite{xu2020unified} studied AltGDA which applies $\ell_2$ regularizer to the local objective function of GDA followed by projection onto the constraint sets {and obtained its convergence rate under nonconvex-concave and convex-nonconcave geometry}. \cite{mokhtari2020unified} also studied two variants of GDA, Extra-gradient and Optimistic GDA, and obtained linear convergence rate under bilinear geometry and strongly-convex-strongly-concave geometry. \cite{cherukuri2017saddle,jin2019local} studied GDA in continuous time dynamics using differential equations. \cite{adolphs2019local} analyzed a second-order variant of the GDA algorithm. 

Many other studies have developed stochastic GDA-type algorithms. \cite{lin2019gradient,yang2020global} analyzed stochastic GDA and stochastic AltGDA respectively. Variance reduction techniques have been applied to stochastic minimax optimization, including SVRG-based \cite{du2019linear,yang2020global}, SARAH/SPIDER-based \cite{xu2020enhanced,luo2020stochastic}, STORM \cite{qiu2020single} and its gradient free version \cite{huang2020accelerated}. 

\noindent \textbf{GDA-type algorithms with momentum:} For strongly-convex-strongly-concave problems, \cite{zhang2020suboptimality} studied accelerated GDA with negative momentum. \cite{Wang2020a,Lin2020a} developed nested-loop AltGDA algorithms with Nesterov's Accelerated Gradient Descent that achieve improved complexities. For convex-concave problems, 
\cite{daskalakis2018limit} analyzed stable points of both GDA {and optimistic GDA that apply negative momentum for acceleration}. Moreover, for nonconvex-(strongly)-concave problems, {\cite{wang2019solving} developed a single-loop GDA with momentum and Hessian preconditioning and studied its convergence to a local minimax point}. {\cite{huang2021efficient} developed a mirror descent ascent algorithm with momentum which includes GDAm as a special case.} \cite{nouiehed2019solving} studied nested-loop GDA where multiple gradient ascent steps are performed, and they also studied the momentum-accelerated version. 
Similarly, \cite{qiu2020single,huang2020accelerated} developed GDA with momentum scheme and STORM variance reduction, and a similar version of this algorithm is extended to minimax optimization on Riemann manifold \cite{huang2020gradient}. \cite{guo2021stochastic} developed a single-loop Primal-Dual Stochastic Momentum algorithm with ADAM-type methods.


\section{Problem Formulation and Preliminaries}\label{sec: pre}

In this section, we introduce the problem formulation and technical assumptions. We consider the following class of regularized minimax optimization problems.
\begin{align}
\min_{x\in \mathbb{R}^m}\max_{y\in \mathcal{Y}}~ f(x,y)+g(x)-h(y), \tag{P}
\end{align}
where $f: \mathbb{R}^m\times \mathcal{Y}\to \mathbb{R}$ is differentiable and nonconvex-strongly-concave, $g,h$ are possibly non-smooth convex regularizers, and $\mathcal{Y}$ is a bounded set with diameter $R$. In particular, define the envelope function $\Phi(x):=\max_{y\in\mathcal{Y}} f(x,y)-h(y)$, then the problem $\mathrm{(P)}$ can be rewritten as the minimization problem $\min_{x\in \mathbb{R}^m} \Phi(x)+g(x)$. 

Throughout the paper, we adopt the following assumptions on the problem (P). These are standard assumptions that have been considered in the existing literature \cite{lin2019gradient,chen2021}.
\begin{assum}\label{assum: f}
	The minimax problem $\mathrm{(P)}$ satisfies:
	\begin{enumerate}[leftmargin=*,topsep=0pt,noitemsep]
		\item Function $f(\cdot,\cdot)$ is $L$-smooth and function $f(x,\cdot)$ is $\mu$-strongly concave for all $x$;
		\item Function $(\Phi+g)(x)$ is bounded below and has compact sub-level sets;
		\item Function $g,h$ are proper and convex.
	\end{enumerate}
\end{assum} 

In particular, item 3 of the above assumption allows the regularizers $g,h$ to be any convex function that can be possibly non-smooth.
Next, consider the mapping $y^*(x)=\arg\max _{y\in\mathcal{Y}}f(x,y)-h(y)$, which is  uniquely defined due to the strong concavity of $f(x,\cdot)$. The following proposition is proved in \cite{lin2019gradient,chen2021} that characterizes the Lipschitz continuity of the mapping $y^*(x)$ and the smoothness of the function $\Phi(x)$. Throughout the paper, we denote $\kappa=L/\mu>1$ as the condition number, and denote $\nabla_1 f(x,y), \nabla_2 f(x,y)$ as the gradients with respect to the first and the second input variables, respectively.


\begin{restatable}[\!\!\cite{lin2019gradient,chen2021}]{proposition}{propphy}\label{prop_Phiystar}
	Let \Cref{assum: f} hold. Then, the mapping $y^*(x)$  is $\kappa$-Lipschitz continuous and
	 the function $\Phi(x)$ is $L(1+\kappa)$-smooth with $\nabla\Phi(x)=\nabla_1 f(x,y^*(x))$.
\end{restatable}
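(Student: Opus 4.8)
The plan is to establish the result in three stages: characterize $y^*(x)$ via first-order optimality, derive its $\kappa$-Lipschitz continuity from strong concavity, and then run an envelope-theorem argument to get the gradient formula and the smoothness of $\Phi$.

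First I would observe that since $f(x,\cdot)$ is $\mu$-strongly concave and $h$ is convex, the map $y\mapsto f(x,y)-h(y)$ is $\mu$-strongly concave, so $y^*(x)$ is indeed the unique maximizer and is characterized by the inclusion $\nabla_2 f(x,y^*(x))\in\partial h(y^*(x))$. For two points $x_1,x_2$, writing this inclusion at $y_i^*:=y^*(x_i)$ and using monotonicity of $\partial h$ gives $\inner{\nabla_2 f(x_1,y_1^*)-\nabla_2 f(x_2,y_2^*)}{y_1^*-y_2^*}\ge 0$. On the other hand, $\mu$-strong concavity of $f(x_2,\cdot)$ yields $\inner{\nabla_2 f(x_2,y_1^*)-\nabla_2 f(x_2,y_2^*)}{y_1^*-y_2^*}\le -\mu\|y_1^*-y_2^*\|^2$. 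Subtracting and bounding $\|\nabla_2 f(x_1,y_1^*)-\nabla_2 f(x_2,y_1^*)\|\le L\|x_1-x_2\|$ via $L$-smoothness of $f$ gives $\mu\|y_1^*-y_2^*\|^2\le L\|x_1-x_2\|\,\|y_1^*-y_2^*\|$, hence $\|y_1^*-y_2^*\|\le\kappa\|x_1-x_2\|$, which is the claimed Lipschitz bound.

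Next, for $\Phi$ I would use the sandwich inequalities that follow directly from optimality of $y^*(x_1)$ and $y^*(x_2)$: for any $x_1,x_2$,
\[ f(x_2,y^*(x_1))-f(x_1,y^*(x_1)) \;\le\; \Phi(x_2)-\Phi(x_1) \;\le\; f(x_2,y^*(x_2))-f(x_1,y^*(x_2)). \]
Expanding the left-hand side to first order in the first argument of $f$ gives $\inner{\nabla_1 f(x_1,y^*(x_1))}{x_2-x_1}+O(\|x_2-x_1\|^2)$ by $L$-smoothness; expanding the right-hand side gives $\inner{\nabla_1 f(x_1,y^*(x_2))}{x_2-x_1}+O(\|x_2-x_1\|^2)$, and the difference $\inner{\nabla_1 f(x_1,y^*(x_2))-\nabla_1 f(x_1,y^*(x_1))}{x_2-x_1}$ is $O(\|x_2-x_1\|^2)$ by $L$-smoothness of $f$ combined with the $\kappa$-Lipschitz bound on $y^*$. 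Hence both sides collapse to the same linear functional, proving $\Phi$ is differentiable with $\nabla\Phi(x)=\nabla_1 f(x,y^*(x))$. Smoothness then follows immediately: $\|\nabla\Phi(x_1)-\nabla\Phi(x_2)\|=\|\nabla_1 f(x_1,y^*(x_1))-\nabla_1 f(x_2,y^*(x_2))\|\le L\big(\|x_1-x_2\|+\|y^*(x_1)-y^*(x_2)\|\big)\le L(1+\kappa)\|x_1-x_2\|$.

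\emph{The hard part} is the envelope step, because $h$ is allowed to be nonsmooth, so the naive Danskin argument that differentiates the inner objective does not apply. The key point is that the nonsmooth term $h(y)$ does not depend on $x$, so it cancels in every difference $\Phi(x_2)-\Phi(x_1)$ once one substitutes a fixed maximizer; the sandwich inequalities above make this rigorous without ever differentiating $h$. Checking that both sides of the sandwich genuinely reduce to the \emph{same} linear functional (not merely close ones) is exactly where the Lipschitz continuity of $y^*$ proved in the first stage is indispensable.
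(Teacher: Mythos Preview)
The paper does not supply its own proof of this proposition; it simply cites \cite{lin2019gradient,chen2021} for the result. Your three-stage argument is correct and is essentially the standard proof given in those references: the $\kappa$-Lipschitz bound via the optimality inclusion and monotonicity of $\partial h$, the envelope/sandwich argument for $\nabla\Phi(x)=\nabla_1 f(x,y^*(x))$, and then the direct Lipschitz bound on $\nabla\Phi$. Your remark that the nonsmooth $h$ cancels in the sandwich inequalities (so Danskin-type reasoning survives) is exactly the point that makes the cited proofs go through in the regularized setting.
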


Lastly, recall that  the minimax problem (P) is equivalent to the minimization problem $\min_{x\in \mathbb{R}^m} \Phi(x)+g(x)$. Therefore, the optimization goal of the minimax problem (P) is to {find a critical point $x^*$ of the nonconvex function $\Phi(x)+g(x)$} that satisfies the optimality condition $\zero \in \partial (\Phi + g)(x^*)$, where $\partial$ denotes the subdifferential operator.




\section{Proximal Alternating GDA with Momentum}
In this section, we propose a {single-loop proximal alternating-GDA with momentum (proximal-AltGDAm)} algorithm to solve the regularized minimax problem (P). 

We first recall the update rules of the basic proximal {alternating-GDA (proximal-AltGDA)} algorithm \cite{boct2020alternating} for solving the problem (P). Specifically, {proximal-AltGDA} alternates between the following two proximal-gradient updates (a.k.a. forward-backward splitting updates \cite{lions1979splitting}).

\textbf{(Proximal-AltGDA)}:
\begin{equation}
\left\{
\begin{aligned}
x_{t+1} &= \prox{\eta_x g}\big({x}_t - \eta_{x}\nabla_1 f(x_t,y_t)\big),\\
y_{t+1} &= \prox{\eta_y h}\big({y}_t + \eta_{y}\nabla_2 f({x_{t+1}}, {y}_t)\big). \nonumber
\end{aligned}
\right.
\end{equation}
To elaborate, the first update is a proximal gradient descent update that aims to minimize the nonconvex function $f(x, y_t) + g(x)$ {from} the current point $x_t$, and the second update is a proximal gradient ascent update that aims to maximize the strongly-concave function $f({x_{t+1}}, y) - h(y)$ {from} the current point $y_t$.  
More specifically, the two proximal gradient mappings are formally defined as 
\begin{align*}
&\prox{\eta_x g}\big(x_t - \eta_{x}\nabla_1 f(x_t,y_t)\big) \\
&:= \argmin_{u \in \mathbb{R}^m} \Big\{g(u) + \frac{1}{2\eta_{x}} \| u - x_t +\eta_{x} \nabla_1 f(x_t,y_t)\|^2 \Big\} , \\
&\prox{\eta_y h}\big(y_t + \eta_{y}\nabla_2 f({x_{t+1}},y_t)\big) \\
&:= \argmin_{v \in \mathcal{Y}} \Big\{h(v) + \frac{1}{2\eta_{y}} \| v - y_t -\eta_{y} \nabla_2 f({x_{t+1}},y_t)\|^2 \Big\}.
\end{align*}

Compared with the standard (proximal) GDA algorithm \cite{lin2019gradient,chen2021}, the proximal ascent step of proximal-AltGDA evaluates the gradient at the freshest point $x_{t+1}$ instead of $x_t$. Such an alternative update is widely used in practice and usually leads to better convergence properties \cite{boct2020alternating,bailey2020finite,gidel2019negative,xu2020unified}.

Next, we introduce momentum schemes to accelerate the convergence of proximal-AltGDA.
As the two proximal gradient steps of proximal-AltGDA are used to solve two different types of optimization problems, namely, the nonconvex problem $f(x,y_t)+g(x)$ and the strongly-concave problem $f({x_{t+1}}, y) - h(y)$, we consider applying different momentum schemes to accelerate these proximal gradient updates. Specifically, the proximal gradient descent step minimizes a composite nonconvex function, and we apply the heavy-ball momentum scheme \cite{POLYAK1964} that was originally designed for accelerating nonconvex optimization. Therefore, we propose the following proximal gradient descent with heavy-ball momentum update for minimizing the nonconvex part of the problem (P).

\textbf{(Heavy-ball momentum):}
\begin{equation}
\left\{
\begin{aligned}
	\widetilde{x}_t &= x_t + \beta (x_t - x_{t-1}),\\
x_{t+1} &= \prox{\eta_x g}\big(\widetilde{x}_t - \eta_{x}\nabla_1 f(x_t,y_t)\big). \nonumber
\end{aligned}
\right.
\end{equation}
To explain, the first step is an extrapolation step that applies the momentum term $\beta (x_t - x_{t-1})$ (with momentum coefficient $\beta>0$), and the second proximal gradient step updates the extrapolation point $\widetilde{x}_t$ using the original gradient $\nabla_1 f(x_t,y_t)$. In conventional gradient-based optimization, gradient descent with such a heavy-ball momentum is guaranteed to find a critical point of smooth nonconvex functions \cite{Ochs2014,Ochs2018}. 

On the other hand, as the proximal gradient ascent step of proximal-AltGDA maximizes a composite strongly-concave function, we are motivated to apply the popular Nesterov's momentum scheme \cite{nesterov1983method}, which was originally designed for accelerating strongly-concave (convex) optimization. Specifically, we propose the following proximal gradient ascent with Nesterov's momentum update for maximizing the strongly-concave part of the problem (P).

\textbf{(Nesterov's momentum):}
\begin{equation}
\left\{
\begin{aligned}
\widetilde{y}_t &= y_t + \gamma (y_t - y_{t-1}), \\
y_{t+1} &= \prox{\eta_y h}\big(\widetilde{y}_t + \eta_{y}\nabla_2 f({x_{t+1}}, \widetilde{y}_t)\big).
\end{aligned}
\right.
\end{equation}
To elaborate, the first step is a regular extrapolation step that involves momentum, which is the same as the first step of the heavy-ball scheme. The only difference from the heavy-ball scheme is that the starting point of the second proximal gradient step changes from $y_t$ to its extrapolated point $\widetilde{y}_t$.

We refer to the above algorithm design as \textbf{proximal-AltGDA with momentum (proximal-AltGDAm)}, and the algorithm updates are formally presented in \Cref{algo: prox-minimax}. It can be seen that proximal-AltGDAm is a simple algorithm that has a single loop structure, and adopts {alternating updates with} momentum acceleration. More importantly, it involves only standard hyper-parameters such as the learning rates and momentum parameters and therefore is easy to implement in practice.
Such a practical algorithm is much simper than the other accelerated GDA-type algorithms that involve complex nested-loop structure and require fine-tuned function smoothing \cite{Lin2020a,zhang2021complexity}. 

\begin{algorithm}
	\caption{Proximal Alternating GDA with Momentum (proximal-AltGDAm)}\label{algo: prox-minimax}
	\label{alg: 1}
	{\bf Input:}  Initialization $x_{-1}=x_0, y_{-1}=y_0$, learning rates $\eta_{x}, \eta_y$, momentum parameters $\beta,\gamma$. \\
	\For{ $t=0,1, 2,\ldots, T-1$}{
		\begin{align*}
		\widetilde{x}_t &= x_t + \beta (x_t - x_{t-1}),\\
		x_{t+1} &= \prox{\eta_x g}\big(\widetilde{x}_t - \eta_{x}\nabla_1 f(x_t,y_t)\big),\\
		\widetilde{y}_t &= y_t + \gamma (y_t - y_{t-1}), \\
		y_{t+1} &= \prox{\eta_y h}\big(\widetilde{y}_t + \eta_{y}\nabla_2 f(x_{t+1}, \widetilde{y}_t)\big).
		\end{align*}
	}
	\textbf{Output:} $x_T, y_T$.
\end{algorithm}


\section{Convergence and Computation Complexity of Proximal-AltGDAm}\label{sec:global}
Although the proposed proximal-AltGDAm algorithm applies the popular heavy-ball momentum and Nesterov's momentum to the GDA updates in a straightforward way, its convergence analysis cannot directly follow from the existing studies of momentum accelerated gradient descent algorithms. To explain more specifically, notice that in the $x$-proximal gradient update of \Cref{alg: 1}, it involves the partial gradient $\nabla_1 f(x_t, y_t)$, which corresponds to the gradient of the time-varying function $f(\cdot, y_t)$ (since $y_t$ changes over time $t$). Similarly, the $y$-proximal gradient update involves the gradient of another time-varying function $f(x_{t+1}, \cdot)$. Therefore, 
both momentum accelerated proximal gradient updates are actually applied to time-varying functions due to the nature of GDA updates. In this sense, the existing analysis of momentum accelerated gradient descent algorithms cannot be directly applied to analyze this algorithm.


To analyze the convergence of \Cref{alg: 1}, we first study the $x$-proximal gradient update with heavy-ball momentum and obtain the following characterization of per-iteration progress on the objective function value.

\begin{restatable}{proposition}{proplya}\label{prop: lyapunov}
	Let \Cref{assum: f} hold. 
Then, the parameter sequences $\{x_t,y_t\}_t$ generated by proximal-AltGDAm satisfy, for all $t=0,1,2,...,$
    \begin{align}
    \Phi&(x_{t+1}) +g(x_{t+1}) \nonumber\\
    &\le \Phi(x_{t})+g(x_t) - \Big(\frac{1-\beta}{2\eta_x} -2L\kappa^{\frac{11}{6}} \Big)\|x_{t+1}-x_t\|^2\nonumber\\
	&\quad +  \frac{\beta}{2\eta_x} \|x_t - x_{t-1}\|^2 + \frac{L}{4\kappa^{\frac{11}{6}}}\|y^*(x_t) - y_t\|^2. \label{eq: Hdec}
    \end{align}
\end{restatable}

The above proposition tracks the per-iteration optimization progress made by the $x$-proximal gradient update with heavy-ball momentum. To elaborate, the increment terms $\|x_{t+1} - x_t\|^2, \|x_{t} - x_{t-1}\|^2$ are induced by the heavy-ball momentum scheme. Moreover, since the $x$-update uses the partial gradient $\nabla_1 f(x_t, y_t)$ to approximate the exact gradient $\nabla \Phi(x_t)=\nabla_1 f(x_t, y^*(x_t))$, it naturally induces a tracking error term $\|y_t - y^*(x_t)\|^2$ that tracks the optimization gap of the $y$-update. Hence, we need to further bound this tracking error term by analyzing the $y$-proximal gradient update with Nesterov's momentum, which we explore next.

As explained earlier, the momentum accelerated $y$-updates in proximal-AltGDAm are applied to a time-varying strongly-concave function $f(x_{t+1},\cdot)$. Hence, the tracking error term $\|y_t - y^*(x_t)\|^2$ cannot be directly bounded using the standard convergence result of Nesterov's accelerated proximal gradient algorithm \cite{Nesterov2014}. Instead, we can view these $y$-updates as inexact accelerated proximal gradient updates. To elaborate, note that in the $t$-th iteration, the $y$-proximal gradient update is applied to the function $f(x_{t+1},\cdot)$. Then, we can rewrite the $y$-updates in all the previous iterations $k=0,1,...,t-1$ as follows.
\begin{align}
	y_{k+1} &= \prox{\eta_y h}\big(\widetilde{y}_k + \eta_{y}\nabla_2 f(x_{t+1}, \widetilde{y}_k) \nonumber\\
	&\qquad\quad + \eta_{y}\underbrace{[\nabla_2 f(x_{k+1}, \widetilde{y}_k) - \nabla_2 f(x_{t+1}, \widetilde{y}_k)]}_{\mathbf{e}_{k+1}}\big). \label{eq: approxPGA}
\end{align}
That is, we can view all the previous $y$-updates as applied to the fixed function $f(x_{t+1},\cdot)$ with an inexactness $\mathbf{e}_{k+1}$ involved in the computation of the partial gradient $\nabla_2 f(x_{t+1}, \widetilde{y}_k)$. In summary, the $y$-updates of proximal-AltGDAm can be understood as inexact accelerated gradient updates applied to the function $f(x_{t+1},\cdot)$ at time $t$. In particular, under the smoothness condition in \Cref{assum: f}, the inexactness is bounded as $\|\mathbf{e}_{k+1}\|\le L\|x_{k+1} - x_{t+1}\|$. Consequently, we can leverage the existing convergence result of inexact accelerated gradient algorithm \cite{schmidt2011convergence} to bound the optimality gap $\|y_{t}-y^*(x_{t})\|^2$ as follows.

\begin{restatable}{proposition}{propdy}\label{prop: dy}
	Let \Cref{assum: f} hold. Choose learning rate $\eta_y=\frac{1}{L}$ and momentum parameter $\gamma = \frac{\sqrt{\kappa}-1}{\sqrt{\kappa}+1}$. Then, the parameter sequences $\{x_t,y_t\}$ generated by proximal-AltGDAm satisfy, for all $t=0,1,2,...$
	\begin{align}
	\|y_{t+1}-&y^*(x_{t+1})\|^2 \le \frac{2R\kappa}{\sqrt{L}}(1-\kappa^{-0.5})^{t+1}  \nonumber\\
	&\quad+ \frac{6\kappa^2}{\sqrt{L}} \sum_{j=1}^{t} (1-\kappa^{-0.5})^{t+1-j/2} \|x_{j+1}-x_j\|.  \label{eq: H}
	\end{align}
\end{restatable}

Intuitively, in the above bound, the first term on the right hand side corresponds to the normal accelerated convergence rate, and the other term is induced by the inexactness $\mathbf{e}_k$. As both terms are scaled by the accelerated convergence factor $(1-\kappa^{-0.5})$, we expect that the above bound converges fast and further facilitates the convergence of \cref{eq: Hdec}. 
Next, substituting \cref{eq: H} into \cref{eq: Hdec} and telescoping, we obtain the following asymptotic stability properties of proximal-AltGDAm. 

\begin{restatable}{coro}{corolya}\label{coro: 1}
Under the conditions of \Cref{prop: dy} and stepsize $\eta_x\le 1/(16L\kappa^{\frac{11}{6}})$, the sequences $\{x_t,y_t\}_t$ generated by proximal-AltGDAm satisfy
\begin{align} 
	\|x_{t+1} - x_t \|, \|y_{t} - y^*(x_t) \|, \|y_{t+1} - y_t \|  \overset{t}{\to} 0. \nonumber
\end{align}
\end{restatable}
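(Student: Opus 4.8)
The plan is to derive Corollary 1 as a direct consequence of the Lyapunov descent inequality \eqref{eq: Hdec} in Proposition \ref{prop: lyapunov}, using a standard telescoping argument. First, I would sum the inequality \eqref{eq: Hdec} over $t = 0, 1, \ldots, T-1$. Since all the coefficients $L\kappa^{1.5}$, $\frac{\beta}{2\eta_x}$, and $\frac{L}{2\kappa^{1.5}}$ on the right-hand side square terms are strictly positive, the telescoping sum yields
\begin{align*}
\sum_{t=0}^{T-1} \Big( L\kappa^{1.5}\|x_{t+1}-x_t\|^2 + \tfrac{\beta}{2\eta_x}\|x_t-x_{t-1}\|^2 + \tfrac{L}{2\kappa^{1.5}}\|y^*(x_t)-y_t\|^2 \Big) \le H(z_0) - H(z_T).
\end{align*}
The next step is to show the right-hand side is bounded uniformly in $T$. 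By item 2 of Assumption \ref{assum: f}, the function $(\Phi + g)(x)$ is bounded below; combined with the fact that the remaining terms $2\mu\|y_t - y^*(x_t)\|^2$ and $\frac{\beta}{\eta_x}\|x_t - x_{t-1}\|^2$ in the definition \eqref{eq: lyapunov} of $H$ are nonnegative, we conclude $H(z_T)$ is bounded below by a constant independent of $T$. Hence $H(z_0) - H(z_T) \le H(z_0) - \inf_x (\Phi+g)(x) < \infty$, so letting $T \to \infty$ shows the infinite series converges.

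Convergence of each of the three nonnegative series $\sum_t \|x_{t+1}-x_t\|^2$, $\sum_t \|x_t - x_{t-1}\|^2$, and $\sum_t \|y^*(x_t) - y_t\|^2$ forces their summands to tend to zero, which gives $\|x_{t+1}-x_t\| \to 0$ and $\|y_t - y^*(x_t)\| \to 0$, two of the three claimed limits. For the third limit $\|y_{t+1} - y_t\| \to 0$, I would use the triangle inequality
\begin{align*}
\|y_{t+1} - y_t\| \le \|y_{t+1} - y^*(x_{t+1})\| + \|y^*(x_{t+1}) - y^*(x_t)\| + \|y^*(x_t) - y_t\|.
\end{align*}
The first and third terms vanish by the result just established (applied at indices $t+1$ and $t$), and the middle term is controlled via Proposition \ref{prop_Phiystar}: since $y^*$ is $\kappa$-Lipschitz, $\|y^*(x_{t+1}) - y^*(x_t)\| \le \kappa \|x_{t+1} - x_t\| \to 0$. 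Combining the three pieces gives $\|y_{t+1} - y_t\| \to 0$.

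The argument is essentially routine given Proposition \ref{prop: lyapunov}, so there is no serious obstacle; the only point requiring a little care is justifying that $H(z_T)$ is bounded below, which is where the compact sub-level set / boundedness-below hypothesis (item 2 of Assumption \ref{assum: f}) enters, together with the observation that the auxiliary terms in $H$ are nonnegative. One subtlety worth noting is that the iterates stay in a sub-level set of $H$ (by monotonicity from \eqref{eq: Hdec}), which in turn keeps $x_t$ in a bounded set via the compact sub-level set assumption on $\Phi+g$; this boundedness is not strictly needed for Corollary \ref{coro: 1} itself but is consistent with the framework and would be used in the subsequent critical-point analysis.
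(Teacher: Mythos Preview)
Your proposal is correct and follows essentially the same approach as the paper: telescope the Lyapunov descent inequality \eqref{eq: Hdec}, bound $H(z_T)$ below via $H(z_T)\ge(\Phi+g)(x_T)\ge\inf_x(\Phi+g)(x)$, conclude summability of the three squared terms, and then handle $\|y_{t+1}-y_t\|$ by the same triangle-inequality splitting combined with the $\kappa$-Lipschitz property of $y^*$ from Proposition~\ref{prop_Phiystar}. The only cosmetic difference is that the paper sums from $t=1$ and inserts an extra step $H(z_1)-H(z_T)\le H(z_0)-(\Phi+g)(x_T)$ using monotonicity of $H(z_t)$, whereas you sum from $t=0$ directly; both are fine.
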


\textbf{Remark 1.} In \cite{chen2021}, the proximal-GDA algorithm (without alternating update and momentum) uses a small learning rate $\eta_{x} \le \mathcal{O}({L^{-2}}\kappa^{-3})$ to establish convergence. As a comparison, proximal-AltGDAm allows to choose a much larger learning rate $\eta_{x} \le {\mathcal{O}(L^{-1}\kappa^{-\frac{11}{6}})}$ to guarantee a faster convergence (proved later), thanks to the  momentum acceleration schemes. 


Therefore, if we can show that $\{x_t\}_t$ converges to a desired critical point $x^*$, then the above stability properties of proximal-AltGDAm implies that $\{y_t\}_t$ converges to the corresponding best response point $y^*(x^*)$.

To further characterize the global convergence property of proximal-AltGDAm, we define the following proximal gradient mapping associated with the objective function $\Phi(x)+g(x)$.
\begin{align}
G(x):=&\frac{1}{\eta_x}\Big(x-\prox{\eta_x g}\big(x - \eta_{x}\nabla \Phi(x)\big)\Big) \label{Phi_pg}.
\end{align}
The proximal gradient mapping is a standard metric for evaluating the optimality of nonconvex composite optimization problems \cite{Nesterov2014}. 
It can be shown that $x$ is a critical point of $(\Phi+g)(x)$ if and only if $G(x)=\zero$, and it reduces to the normal gradient when there is no regularizer $g$. Hence, our \textbf{convergence criterion} is to find a near-critical point $x$ that satisfies $\|G(x)\|\le \epsilon$ for some pre-determined accuracy $\epsilon >0$.

Next, by leveraging \Cref{prop: lyapunov} and \Cref{prop: dy}, we obtain the following global convergence result of proximal-AltGDAm and characterize its computational complexity (number of gradient and proximal mapping evaluations).

\begin{restatable}[Global convergence]{thm}{thmconv}\label{thm: 1}
Under the conditions of \Cref{prop: dy} and stepsize $\eta_x\le 1/(16L\kappa^{\frac{11}{6}})$, {the sequence $\{x_t\}_t$ generated by proximal-AltGDAm is bounded and has a compact set of limit points.  Also, every such limit point is a critical point of $(\Phi+g)(x)$}. Moreover, the total number of iterations $T$ required to achieve ${\min_{0\le t\le T}\|G(x_t)\|}\le\epsilon$ is $T=\mathcal{O}\big(\kappa^{\frac{11}{6}}\epsilon^{-2}\big)$, which is also the order of the required computational complexity.
\end{restatable}

To elaborate, the first statement of \Cref{thm: 1} shows that the sequence generated by proximal-AltGDAm converges to critical points of the minimax problem. The second statement proves that the {computation} complexity of proximal-AltGDAm for finding a near-critical point is of the order $\mathcal{O}\big(\kappa^{\frac{11}{6}}\epsilon^{-2}\big)$, which strictly improves the complexity $\mathcal{O}\big(\kappa^2\epsilon^{-2}\big)$ of both GDA \cite{lin2019gradient} and proximal-AltGDA \cite{boct2020alternating} in nonconvex-strongly concave minimax optimization. We note that the improvement is substantial when the problem condition number $\kappa = L/\mu$ is large, while the dependence on $\epsilon^{-2}$ is generally unimprovable in nonconvex optimization. 
In addition, thanks to the momentum acceleration schemes, our choice of learning rate $\eta_x=\mathcal{O}(L^{-1}\kappa^{-\frac{11}{6}})$ is more flexible than that $\eta_x=\mathcal{O}(L^{-1}\kappa^{-2})$ adopted by these GDA-type algorithms. These improvements are not only attributed to momentum acceleration, but also to the elaborate selection of the coefficients in the proof that aims to minimize the dependence of the complexity on $\kappa$.

\section{Experiments}
In this section, we compare the performance of proximal-AltGDAm with that of other GDA-type algorithms via numerical experiments. Specifically, we compare proximal-AltGDAm with the standard proximal-GDA/AltGDA algorithm \cite{chen2021} and the single-loop accelerated AltGDA algorithm (APDA) \cite{Zhu2020a}.  All these algorithms have a single-loop structure.

We consider solving the following regularized Wasserstein robustness model (WRM) \cite{Sinha2018CertifyingSD} using the MNIST dataset \cite{Lecun98}. 
\begin{align}
	\min_\theta \max_{\{\xi_i\}^n_{i=1}} &\frac{1}{n} \sum_{i=1}^{n} \Big[  \ell(h_\theta(\xi_i), y_i) - \lambda \| \xi_i - x_i\|^2 \Big] \nonumber\\
	&\qquad\qquad-\lambda_1\sum_{i=1}^n\|\xi_i\|_1 + \frac{\lambda_2}{2}\|\theta\|^2,
\end{align}
where $n = 60$k is the number of training samples, $\theta$ is the model parameter, $(x_i,y_i)$ corresponds to the $i$-th data sample and label, respectively, and $\xi_i$ is the adversarial sample corresponding to $x_i$. We choose the cross-entropy loss function $\ell$. We add an $\ell_1$ regularization to impose sparsity on the adversarial examples, and add an $\ell_2^2$ regularization to prevent divergence of the model parameters.

We set $\lambda=1.0$ that suffices to make the maximization part be strongly-concave, and set $\lambda_1=\lambda_2=10^{-4}$. We use a convolution network that consists of two convolution blocks followed by two fully connected layers. Specifically, each convolution block contains a convolution layer, a max-pooling layer with stride step $2$, and a ReLU activation layer. The convolution layers in the two blocks have $1,10$ input channels and $10,20$ output channels, respectively, and both of them have kernel size $5$, stride step $1$ and no padding. The two fully connected layers have input dimensions $320,50$ and output dimensions $50,10$, respectively. 

We implement all three algorithms using full gradients with the whole training set of $60$k images.
We choose the same learning rates $\eta_{x}=\eta_{y} = 10^{-3}$ for all algorithms. For proximal-AltGDAm, we choose momentum $\beta = 0.25$ and  $\gamma=0.75$. For APDA, we choose the fine-tuned $\eta =2 \eta_{x}$.   
As the function $\Phi(x)$ cannot be exactly evaluated, we run $100$ steps of stochastic gradient ascent updates with learning rate $0.1$ to maximize $f(x_t, y)-h(y)$ and obtain an approximated $y^*(x_t)$, which is used to estimate $\Phi(x)+g(x)$. 
\begin{figure}[H]
	\centering
	\includegraphics[width=0.48\linewidth]{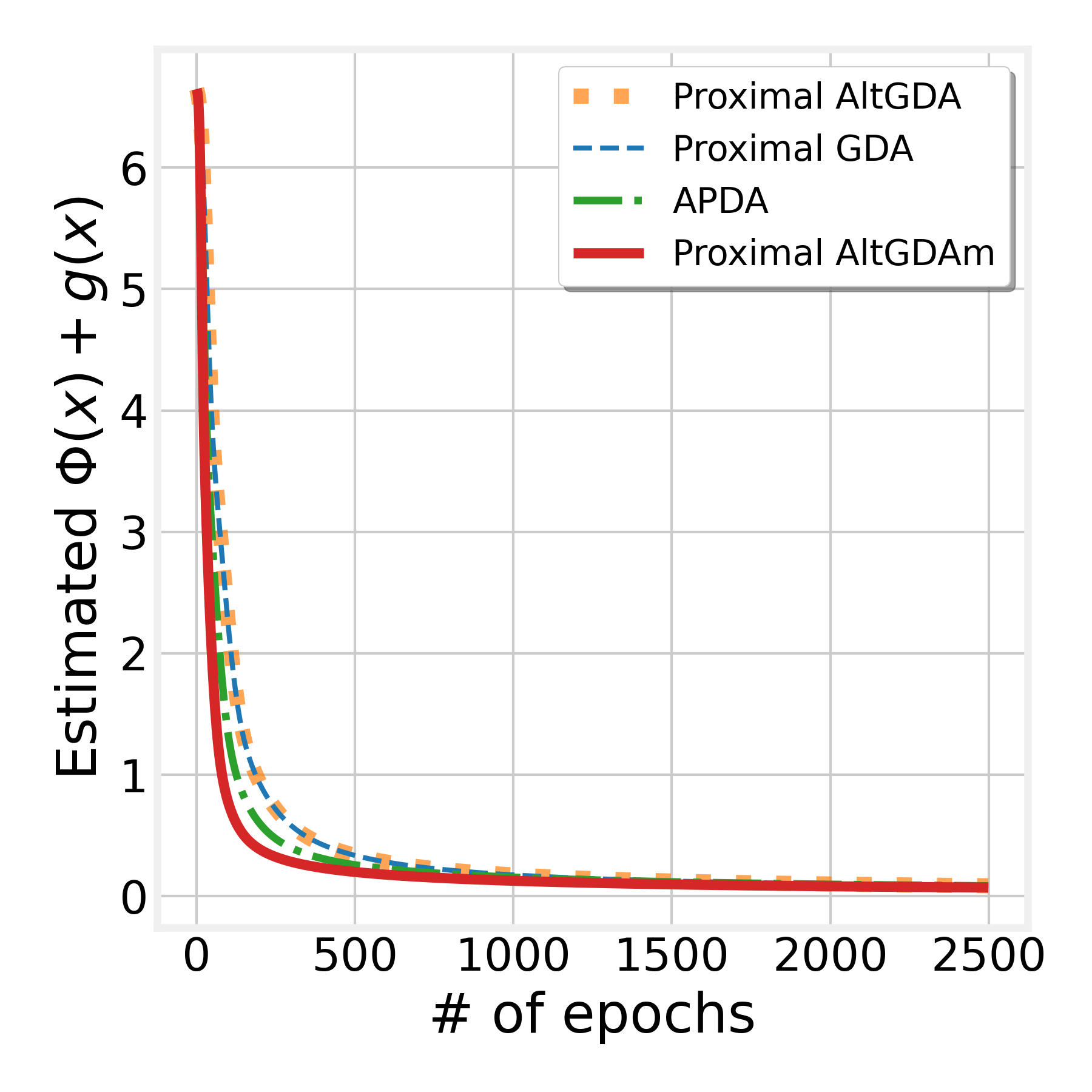}
	\includegraphics[width=0.48\linewidth]{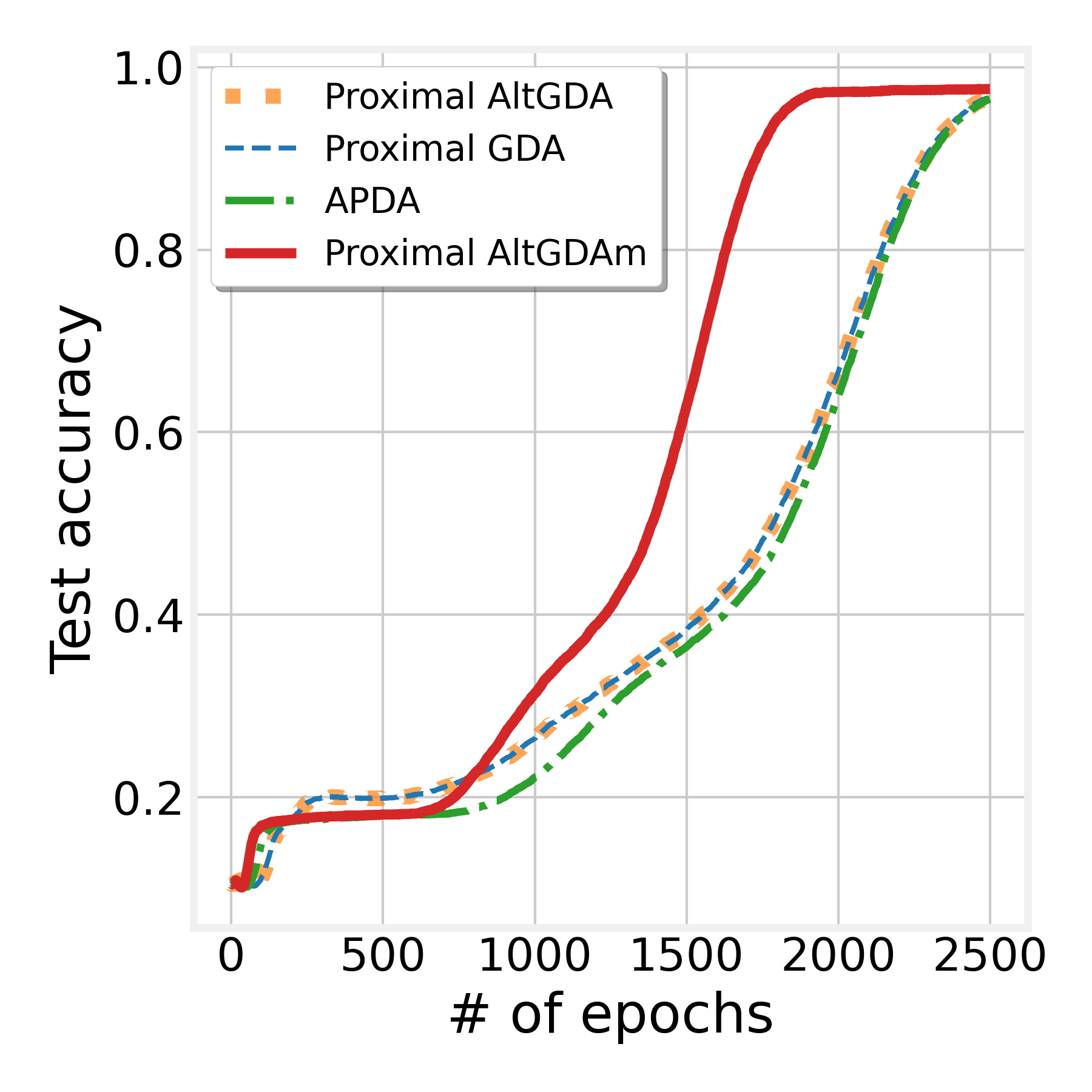}
	\vspace{-1mm}
	\caption{Left: comparison of $\Phi(x)+g(x)$ of all four algorithms. Right: comparison of the corresponding accuracy on the test dataset.}
	\label{fig:2}
	\vspace{-1mm}
\end{figure} 
\Cref{fig:2} (Left) compares the estimated objective function value achieved by all the three algorithms. It can be seen that proximal-AltGDAm achieves the fastest convergence among these algorithms and is significantly faster than the proximal-GDA and the proximal-AltGDA. This demonstrates the effectiveness of our simple momentum scheme.  \Cref{fig:2} (Right) further demonstrates the advantage of proximal-AltGDAm in the test accuracy. It can be seen that the robust model trained by proximal-AltGDAm achieves a much higher test accuracy.

\section{Conclusion}
We develop a single-loop and fast AltGDA algorithm that leverages proximal gradient updates and momentum acceleration to solve general regularized nonconvex-strongly-concave minimax optimization problems. By viewing the GDA updates of the algorithm as inexact accelerated gradient updates, we prove that the algorithm converges to a $\epsilon$-critical point with a computational complexity $\mathcal{O}(\kappa^{\frac{11}{6}}\epsilon^{-2})$, which substantially improves the state-of-the-art result. 
In the future work, it is interesting to develop a stochastic variant of this algorithm to further improve the sample complexity.


\section*{Acknowledgment}

The work of Ziyi Chen, Shaocong Ma and Yi Zhou was supported in part by U.S. National Science Foundation under the Grants CCF-2106216 and DMS-2134223.

\newpage
{
    \bibliographystyle{abbrv}
	\bibliography{./ref}
}

\newpage
\onecolumn
\appendices


\allowdisplaybreaks

\section{Proof of Proposition \ref{prop: lyapunov} } \label{sec_prop2proof}

\proplya*

\begin{proof}
Consider the $t$-th iteration of PGDAm. As the function $\Phi$ is $L(1+\kappa)$-smooth (from \Cref{prop_Phiystar}), we obtain that
\begin{align}
	\Phi(x_{t+1}) \le 	\Phi(x_{t}) + \inner{x_{t+1}-x_t}{\nabla \Phi(x_{t}) } + \frac{L(1+\kappa)}{2}\|x_{t+1}-x_t\|^2. \label{eq: 1}
\end{align}
On the other hand, by the definition of the proximal gradient step of $x_t$, we have that
\begin{align*}
	g(x_{t+1})+\frac{1}{2\eta_x} \|x_{t+1} - \widetilde{x}_t +\eta_{x} \nabla_1 f(x_t,y_t)\|^2 \le g(x_t)+\frac{1}{2\eta_x} \|x_t - \widetilde{x}_t + \eta_{x} \nabla_1 f(x_t,y_t)\|^2,
\end{align*}
which further simplifies to 
\begin{align}
g(x_{t+1}) &\le g(x_t) + \frac{1}{2\eta_x} \|x_{t} - \widetilde{x}_t\|^2 + \inner{x_{t} - \widetilde{x}_t }{\nabla_1 f(x_t,y_t)} \nonumber\\
&\quad- \frac{1}{2\eta_x} \|x_{t+1} - \widetilde{x}_t\|^2 - \inner{x_{t+1} - \widetilde{x}_t }{\nabla_1 f(x_t,y_t)} \nonumber\\
&\overset{(i)}{=} g(x_t) + \frac{\beta^2}{2\eta_x} \|x_{t} - {x}_{t-1}\|^2 - \frac{1}{2\eta_x} \|x_{t+1} - {x}_t - \beta (x_t - x_{t-1})\|^2  \nonumber\\
&\quad + \inner{x_t - x_{t+1}}{\nabla_1 f(x_t,y_t)} \nonumber\\
&= g(x_t) + \frac{\beta^2}{2\eta_x} \|x_{t} - {x}_{t-1}\|^2 - \frac{1}{2\eta_x} \|x_{t+1} - {x}_t\|^2 - \frac{\beta^2}{2\eta_x} \|x_t - x_{t-1}\|^2  \nonumber\\
&\quad + \frac{\beta}{\eta_x} \inner{x_{t+1} - x_t }{x_t - x_{t-1}}  +  \inner{x_t - x_{t+1}}{\nabla_1 f(x_t,y_t)} \nonumber\\
&\le g(x_t) - \frac{1}{2\eta_x} \|x_{t+1} - {x}_t\|^2  + \frac{\beta}{2\eta_x} \|x_{t+1} - x_t \|^2 +  \frac{\beta}{2\eta_x} \|x_t - x_{t-1}\|^2  +  \inner{x_t - x_{t+1}}{\nabla_1 f(x_t,y_t)}, \label{eq: 2}
\end{align}
where (i) uses the fact that $x_t - \widetilde{x}_t = \beta (x_{t-1} - x_t)$.

Adding up \cref{eq: 1} and \cref{eq: 2} yields that
\begin{align}
	&\Phi(x_{t+1}) +g(x_{t+1}) \nonumber\\
	&\le 	\Phi(x_{t})+g(x_t) - \Big(\frac{1}{2\eta_x} -\frac{L(1+\kappa)}{2} \Big)\|x_{t+1}-x_t\|^2 + \frac{\beta}{2\eta_x} \|x_{t+1} - x_t \|^2 +  \frac{\beta}{2\eta_x} \|x_t - x_{t-1}\|^2  \nonumber\\
	&\quad + \inner{x_{t+1}-x_t}{\nabla \Phi(x_{t}) - \nabla_1 f(x_t,y_t)} \nonumber\\
	&\le \Phi(x_{t})+g(x_t) - \Big(\frac{1}{2\eta_x} -\frac{L(1+\kappa)}{2} \Big)\|x_{t+1}-x_t\|^2 + \frac{\beta}{2\eta_x} \|x_{t+1} - x_t \|^2 +  \frac{\beta}{2\eta_x} \|x_t - x_{t-1}\|^2  \nonumber\\
	&\quad+ \|x_{t+1}-x_t\| \|\nabla \Phi(x_{t}) - \nabla_1 f(x_t,y_t)\| \nonumber\\
	&\le \Phi(x_{t})+g(x_t) - \Big(\frac{1}{2\eta_x} -\frac{L(1+\kappa)}{2} \Big)\|x_{t+1}-x_t\|^2 + \frac{\beta}{2\eta_x} \|x_{t+1} - x_t \|^2 +  \frac{\beta}{2\eta_x} \|x_t - x_{t-1}\|^2  \nonumber\\
	&\quad+ \|x_{t+1}-x_t\| \|\nabla_1 f(x_{t},y^*(x_t)) - \nabla_1 f(x_t,y_t)\| \nonumber\\
	&\le \Phi(x_{t})+g(x_t) - \Big(\frac{1}{2\eta_x} -L\kappa \Big)\|x_{t+1}-x_t\|^2 + \frac{\beta}{2\eta_x} \|x_{t+1} - x_t \|^2 +  \frac{\beta}{2\eta_x} \|x_t - x_{t-1}\|^2  \nonumber\\
	&\quad+ L\|x_{t+1}-x_t\| \|y^*(x_t) - y_t\| \nonumber\\
	&\stackrel{(i)}{\le} \Phi(x_{t})+g(x_t) - \Big(\frac{1-\beta}{2\eta_x} -L\kappa - L\kappa^{\frac{11}{6}} \Big)\|x_{t+1}-x_t\|^2 +  \frac{\beta}{2\eta_x} \|x_t - x_{t-1}\|^2 + \frac{L}{4\kappa^{\frac{11}{6}}}\|y^*(x_t) - y_t\|^2 \nonumber\\
	&\le \Phi(x_{t})+g(x_t) - \Big(\frac{1-\beta}{2\eta_x} - 2L\kappa^{\frac{11}{6}} \Big)\|x_{t+1}-x_t\|^2 +  \frac{\beta}{2\eta_x} \|x_t - x_{t-1}\|^2 + \frac{L}{4\kappa^{\frac{11}{6}}}\|y^*(x_t) - y_t\|^2.
\end{align}
where (i) uses AM-GM inequality that $ab\le \frac{Ca^2}{2}+\frac{b^2}{2C}$ for any $a,b,C\ge 0$. This proves eq. \eqref{eq: Hdec}


\end{proof}

\section{Proof of Proposition \ref{prop: dy} } \label{sec_prop3proof}

\propdy*

\begin{proof}
    We rewrite the inner accelerated gradient ascent steps in Algorithm \ref{algo: prox-minimax} as the inexact-proximal gradient method \eqref{eq: approxPGA}. 
    Then, based on Theorem 4 of \cite{schmidt2011convergence}, using $\eta_y=\frac{1}{L}$ and $\gamma=\frac{\sqrt{\kappa}-1}{\sqrt{\kappa}+1}$, this method has the following convergence rate.
    \begin{align}
        &f(x_{t+1},y_{t+1})-f(x_{t+1},y^*(x_{t+1})) \nonumber\\
        &\le (1-\kappa^{-0.5})^{t+1}\Big(\sqrt{2\big(f(x_{t+1},y_{t+1})-f(x_{t+1},y^*(x_{t+1}))\big)}+\sqrt{\frac{2}{\mu}}\sum_{i=1}^{t+1}\|e_i\|(1-\kappa^{-0.5})^{-i/2}\Big). \label{eq: inexact_rate}
    \end{align}
    The above convergence rate can be simplified as follows. 
	\begin{align}
		&\frac{\mu}{2}\|y_{t+1}-y^*(x_{t+1})\|^2 \nonumber\\
		&\overset{(i)}{\le} f(x_{t+1},y_{t+1})-f(x_{t+1},y^*(x_{t+1})) \nonumber\\
		&\le (1-\kappa^{-0.5})^{t+1}\Big(\sqrt{2\big(f(x_{t+1},y_{t+1})-f(x_{t+1},y^*(x_{t+1}))\big)}+\sqrt{\frac{2}{\mu}}\sum_{i=1}^{t+1}\|e_i\|(1-\kappa^{-0.5})^{-i/2}\Big)  \nonumber\\
		&\overset{(ii)}{\le} (1-\kappa^{-0.5})^{t+1}\sqrt{L\|y_{t+1}-y^*(x_{t+1})\|^2} + 
		\sqrt{\frac{2}{\mu}} \sum_{i=1}^{t+1}\|\nabla_2 f(x_i, \widetilde{y}_{i-1})-\nabla_2 f(x_{t+1}, \widetilde{y}_{i-1})\|(1-\kappa^{-0.5})^{t+1-i/2} \nonumber\\
		&\overset{(iii)}{\le} R\sqrt{L}(1-\kappa^{-0.5})^{t+1} +
		\sqrt{\frac{2}{\mu}} \sum_{i=1}^{t+1} (1-\kappa^{-0.5})^{t+1-i/2} \sum_{j=i}^{t} L\|x_{j+1}-x_j\| \nonumber\\
		&= R\sqrt{L}(1-\kappa^{-0.5})^{t+1} + 
		L\sqrt{\frac{2}{\mu}} \sum_{j=1}^{t} \sum_{i=1}^{j} (1-\kappa^{-0.5})^{t+1-i/2} \|x_{j+1}-x_j\| \nonumber\\
		&= R\sqrt{L}(1-\kappa^{-0.5})^{t+1} + 
		\sqrt{2L\kappa} \sum_{j=1}^{t} (1-\kappa^{-0.5})^{t+0.5} \frac{(1-\kappa^{-0.5})^{-j/2}-1}{(1-\kappa^{-0.5})^{-0.5}-1} \|x_{j+1}-x_j\| \nonumber\\
		&\le R\sqrt{L}(1-\kappa^{-0.5})^{t+1} + 
		\sqrt{2L\kappa} \sum_{j=1}^{t} \frac{(1-\kappa^{-0.5})^{t+1-j/2}}{1-(1-\kappa^{-0.5})^{0.5}} \|x_{j+1}-x_j\| \nonumber\\
		&\overset{(iv)}{\le} R\sqrt{L}(1-\kappa^{-0.5})^{t+1} + 
		2\kappa\sqrt{2L} \sum_{j=1}^{t} (1-\kappa^{-0.5})^{t+1-j/2} \|x_{j+1}-x_j\|, \nonumber
	\end{align}
	where (i) and (ii) use $\nabla_2 f(x_{t+1},y^*(x_{t+1}))=0$ and Assumption \ref{assum: f}.1 that $f(x,\cdot)$ is $L$-smooth and $\mu$-strongly concave, (iii) uses the fact that $\mathcal{Y}$ is bounded with diameter $R$ and Assumption \ref{assum: f}.1 that $f$ is $L$-smooth, and (iv) uses $\frac{1}{1-(1-\kappa^{-0.5})^{0.5}}=\frac{1+(1-\kappa^{-0.5})^{0.5}}{\kappa^{-0.5}}\le 2\kappa^{0.5}$. Multiplying the above inequality with $2/\mu$ proves \Cref{prop: dy}.
\end{proof}
\section{Proof of Corollary \ref{coro: 1}} \label{sec_coro1proof}
\corolya*

\begin{proof}
Telescoping eq. \eqref{eq: H} over $t=0, 1, \ldots, T-1$ yields that
\begin{align}
	&\sum_{t=0}^{T-1}\|y_{t+1}-y^*(x_{t+1})\|^2 \nonumber\\
	&\le \frac{2R\kappa}{\sqrt{L}}\sum_{t=0}^{T-1}(1-\kappa^{-0.5})^{t+1}  + \frac{6\kappa^2}{\sqrt{L}} \sum_{t=0}^{T-1}\sum_{j=1}^{t} (1-\kappa^{-0.5})^{t+1-j/2} \|x_{j+1}-x_j\| \nonumber\\
	&\le \frac{2R\kappa^{1.5}}{\sqrt{L}} + \frac{6\kappa^2}{\sqrt{L}}\sum_{j=1}^{T-1}\sum_{t=j}^{T-1} (1-\kappa^{-0.5})^{t+1-j/2} \|x_{j+1}-x_j\| \nonumber\\
	&\le \frac{2R\kappa^{1.5}}{\sqrt{L}} + \frac{6\kappa^{2.5}}{\sqrt{L}}\sum_{j=1}^{T-1} (1-\kappa^{-0.5})^{j/2} \|x_{j+1}-x_j\|\nonumber\\
	&\le \frac{2R\kappa^{1.5}}{\sqrt{L}} + \frac{3\kappa^{2.5}}{\sqrt{L}}\sum_{j=1}^{T-1} \Big( \frac{1}{\kappa^{\frac{7}{6}}\sqrt{L}}(1-\kappa^{-0.5})^j + \kappa^{\frac{7}{6}}\sqrt{L}\|x_{j+1}-x_j\|^2 \Big)\nonumber\\
	&\le \frac{2R\kappa^{1.5}}{\sqrt{L}} + \frac{3\kappa^{\frac{11}{6}}}{L} + 3\kappa^{\frac{11}{3}}\sum_{j=1}^{T-1}\|x_{j+1}-x_j\|^2. \label{eq: sumH}
	\end{align}
Then, telescoping eq. \eqref{eq: Hdec} over $t=0, 1, \ldots, T-1$ yields that
\begin{align}
    &\Phi(x_T)+g(x_T)-\Phi(x_0)-g(x_0) \nonumber\\
    &\le - \Big(\frac{1-\beta}{2\eta_x} -2L\kappa^{\frac{11}{6}} \Big)\sum_{t=0}^{T-1}\|x_{t+1}-x_t\|^2  +  \frac{\beta}{2\eta_x} \sum_{t=0}^{T-1}\|x_t - x_{t-1}\|^2 + \frac{L}{4\kappa^{\frac{11}{6}}}\sum_{t=0}^{T-1}\|y^*(x_t) - y_t\|^2 \nonumber\\
    &\overset{(i)}{\le} -\Big(\frac{1-2\beta}{2\eta_x} -2L\kappa^{\frac{11}{6}} \Big)\sum_{t=0}^{T-1}\|x_{t+1}-x_t\|^2  + \frac{L}{4\kappa^{\frac{11}{6}}} \Big(R^2+\frac{2R\kappa^{1.5}}{\sqrt{L}} + \frac{3\kappa^{\frac{11}{6}}}{L} + 3\kappa^{\frac{11}{3}}\sum_{j=1}^{T-1}\|x_{j+1}-x_j\|^2\Big)\nonumber\\
    &\le -\Big(\frac{1-2\beta}{2\eta_x} -3L\kappa^{\frac{11}{6}} \Big)\sum_{t=0}^{T-1}\|x_{t+1}-x_t\|^2 + \frac{LR^2}{4\kappa^{\frac{11}{6}}} + \frac{R\sqrt{L}}{2\kappa^{\frac{1}{3}}}+1 \label{eq: dPhi}
\end{align}
where (i) uses $x_{-1}=x_0$, $\|y^*(x_0)-y_0\|\le R$ and eq. \eqref{eq: sumH}. When $\eta_x\le \frac{1}{16L\kappa^{\frac{11}{6}}}$ and $\beta\le \frac{1}{4}$, rearranging the above inequality yields that
\begin{align}
    L\kappa^{\frac{11}{6}} \sum_{t=0}^{T-1}\|x_{t+1}-x_t\|^2 &\le \Phi(x_0)+g(x_0)-\inf_{x\in\mathbb{R}^m} \big(\Phi(x)+g(x)\big) + \frac{LR^2}{4\kappa^{\frac{11}{6}}} + \frac{R\sqrt{L}}{2\kappa^{\frac{1}{3}}}+1 < +\infty \label{eq: sumx}
\end{align}

Letting $T\to \infty$ in the above inequality yields that $\sum_{t=0}^{\infty}\|x_{t+1}-x_t\|^2<+\infty$, so $\|x_{t+1}-x_t\|\overset{t}{\to} 0$. Then, letting $T\to \infty$ in eq. \eqref{eq: sumH} yields that $\sum_{t=0}^{\infty}\|y_{t+1}-y^*(x_{t+1})\|^2\le \frac{2R\kappa^{1.5}}{\sqrt{L}} + \frac{3\kappa^{\frac{11}{6}}}{L} + 3\kappa^{\frac{11}{3}}\sum_{j=1}^{\infty}\|x_{j+1}-x_j\|^2 < +\infty$, so $\|y_t-y^*(x_t)\|\overset{t}{\to} 0$. 

The last term $\|y_{t+1} - y_t\|\overset{t}{\to} 0$ can be proved as follows. 
\begin{align}
\|y_{t+1} - y_t\| &\le 	\|y_{t+1} - y^*(x_{t+1})\| + \|y^*(x_t)-y_{t}\| + \|y^*(x_{t+1})-y^*(x_t)\| \nonumber\\
&\overset{(i)}{\le} \|y_{t+1} - y^*(x_{t+1})\| + \|y_{t} - y^*(x_t)\| + \kappa\|x_{t+1}-x_t\| \overset{t}{\to} 0, \nonumber
\end{align}
where (i) uses the fact that $y^*$ is $\kappa$-Lipschitz.
\end{proof}

\section{Proof of Theorem \ref{thm: 1}} \label{sec_thm1proof}
\thmconv*

\begin{proof}
We first prove the existence of the limit points of $\{x_t\}$. Note that in eq. \eqref{eq: dPhi}, $\frac{1-2\beta}{2\eta_x} -2L\kappa^{\frac{11}{6}}\ge 0$ since $\eta_x\le \frac{1}{16L\kappa^{\frac{11}{6}}}$ and $\beta\le\frac{1}{4}$ as specified in \Cref{prop: dy}. Hence, for all $T\ge 0$,
\begin{align}
    \Phi(x_T)+g(x_T)&\le \Phi(x_0)+g(x_0) + \frac{LR^2}{4\kappa^{\frac{11}{6}}} + \frac{R\sqrt{L}}{2\kappa^{\frac{1}{3}}}+1 < +\infty, \nonumber
\end{align}
which implies that $\{\Phi(x_t)+g(x_t)\}_t$ is upper bounded. Hence, based on Assumption \ref{assum: f}.2, the sequence $\{x_t\}_t$ is bounded and thus has a compact set of limit points. \\

Next, we prove that every limit point $x$ of $\{x_t\}_t$ is a critical point of $(\Phi+g)(x)$, i.e., $\zero\in\partial(\Phi+g)(x)$.
By the optimality condition of the proximal gradient update of $x_{t+1}$ we have
\begin{align}
\zero &\in \partial g(x_{t+1}) + \frac{1}{\eta_x} \big(x_{t+1} - \widetilde{x}_t + \eta_{x} \nabla_1 f(x_t, y_t)\big) \nonumber\\
&=  \partial g(x_{t+1}) + \frac{1}{\eta_x} \big(x_{t+1} - x_t - \beta (x_t - x_{t-1}) + \eta_{x} \nabla_1 f(x_t, y_t)\big), \nonumber
\end{align}
which implies that $-\frac{1}{\eta_x} \big(x_{t+1} - x_t - \beta (x_t - x_{t-1}) + \eta_{x} \nabla_1 f(x_t, y_t)\big)\in \partial g(x_{t+1})$ and thus by convexity of $g$ we have
\begin{align}
    g(x)\ge g(x_{t(j)+1})-\frac{1}{\eta_x} \big(x_{t+1} - x_t - \beta (x_t - x_{t-1}) + \eta_{x} \nabla_1 f(x_t, y_t)\big)^{\top}(x-x_{t(j)+1}); \forall x\in\mathbb{R}^m. \label{eq: gge}
\end{align}
As $x_{t(j)}\overset{j}{\to} x^*$ and $\|y_{t(j)} - y^*(x_{t(j)}) \| \overset{t}{\to} 0$, we have $y_{t(j)}\overset{t}{\to} y^*(x^*)$ due to continuity of $y^*(\cdot)$. Also note that the convex function $g$ is continuous (See Corollary 10.1.1 of \cite{rockafellar1970convex}). Hence, letting $j\to\infty$ in eq. \eqref{eq: gge} yields that
\begin{align}
    g(x)\ge g(x^*)-\nabla_1 f(x^*, y^*(x^*)^{\top}(x-x^*)=g(x^*)-\nabla\Phi(x^*)^{\top}(x-x^*); \forall x\in\mathbb{R}^m, \label{eq: gge_star}
\end{align}
which further implies that $-\nabla \Phi(x^*)\in\partial g(x^*)\Rightarrow \zero\in\partial (\Phi+g)(x^*)$. Hence, $x^*$ in a critical point of $(\Phi+g)(x)$.\\

Finally, we derive the non-asymptotic computational complexity to obtain $\min_{0\le t\le T} \|G(x_t)\|\le \epsilon$. Note that
\begin{align}
	\|G(x_{t+1})\|=&\frac{1}{\eta_x} \big\|x_{t+1}-\text{prox}_{\eta_x g}\big(x_{t+1}-\eta_x\nabla\Phi(x_{t+1})\big)\big\|\nonumber\\
	\overset{(i)}{\le}& \frac{1}{\eta_x} \big\|x_{t+1}-\widetilde{x}_t+\eta_x\big[\nabla_1 f(x_t,y_t)-\nabla f_1\big(x_{t+1},y^*(x_{t+1})\big)\big]\big\|\nonumber\\
	\le& \frac{1}{\eta_x} \big\|x_{t+1}-x_t-\beta(x_t-x_{t-1})\big\|+L\|x_{t+1}-x_t\|+L\|y^*(x_{t+1})-y^*(x_t)\|+L\|y^*(x_t)-y_t\|\nonumber\\
	\overset{(ii)}{\le}& \Big(\frac{1}{\eta_x}+L+L\kappa\Big)\|x_{t+1}-x_t\| + \frac{\beta}{\eta_x}\|x_t-x_{t-1}\|+L\|y^*(x_t)-y_t\|, \nonumber
\end{align}
where (i) uses $x_{t+1} \in \prox{\eta_x g}\big(\widetilde{x}_t - \eta_{x}\nabla_1 f(x_t,y_t)\big)$, $\nabla\Phi(x)=\nabla f_1\big(x,y^*(x)\big)$ (from Proposition \ref{prop_Phiystar}) and the non-expansiveness of proximal mapping, (ii) uses the property that $y^*$ is $\kappa$-Lipschitz continuous in Proposition \ref{prop_Phiystar}. Hence, 
\begin{align}
    &(T-1)\min_{0\le t\le T} \|G(x_t)\|^2\nonumber\\
	&\le (T-1)\min_{1\le t\le T-1} \|G(x_{t+1})\|^2\nonumber\\
	&\le \sum_{t=1}^{T-1}\|G(x_{t+1})\|^2\nonumber\\
	&\le \sum_{t=1}^{T-1} \Big[3\Big(\frac{1}{\eta_x}+L+L\kappa\Big)^2\|x_{t+1}-x_t\|^2 + \frac{3\beta^2}{\eta_x^2}\|x_t-x_{t-1}\|^2+3L^2\|y^*(x_t)-y_t\|^2\Big] \nonumber\\
	&\overset{(i)}{\le} 3(18L\kappa^{\frac{11}{6}})^2
	\sum_{t=0}^{T-1}\|x_{t+1}-x_t\|^2 + 27L^2\kappa^{\frac{11}{3}}
	\sum_{t=0}^{T-1}\|x_{t+1}-x_t\|^2 + 3L^2\sum_{t=0}^{T-1}\|y^*(x_t)-y_t\|^2 \nonumber\\
	&\overset{(ii)}{\le} 999L^2\kappa^{\frac{11}{3}} \sum_{t=0}^{T-1}\|x_{t+1}-x_t\|^2  + 3L^2\Big(\frac{2R\kappa^{1.5}}{\sqrt{L}} + \frac{3\kappa^{\frac{11}{6}}}{L} + 3\kappa^{\frac{11}{3}}\sum_{j=1}^{T-1}\|x_{j+1}-x_j\|^2\Big) + 3L^2\|y^*(x_0)-y_0\|^2, \nonumber\\
	&\overset{(iii)}{\le} \frac{1008L^2\kappa^{\frac{11}{3}}}{L\kappa^{\frac{11}{6}}} \Big(\Phi(x_0)+g(x_0)-\inf_{x\in\mathbb{R}^m} \big(\Phi(x)+g(x)\big) + \frac{LR^2}{4\kappa^{\frac{11}{6}}} + \frac{R\sqrt{L}}{2\kappa^{\frac{1}{3}}}+1\Big) + 6RL^{1.5}\kappa^{1.5}+9L\kappa^{\frac{11}{6}} + 3L^2R^2 \nonumber\\
	&=\mathcal{O}(\kappa^{\frac{11}{6}}). \nonumber
\end{align}
where (i) uses $\beta\le \frac{1}{4}$ and the maximum possible stepsize $\eta_x=\frac{1}{16L\kappa^{\frac{11}{6}}}$, (ii) uses eq. \eqref{eq: sumH}, and (iii) uses eq. \eqref{eq: sumx} and the fact that $\mathcal{Y}$ is bounded with diameter $R$. Based on the above inequality, 
when the number of iterations $T\ge \mathcal{O}(\kappa^{\frac{11}{6}}\epsilon^{-2})$, $\min_{0\le t\le T} \|G(x_t)\|\le \sqrt{\mathcal{O}(\kappa^{\frac{11}{6}})/(T-1)}\le \epsilon$. Since each iteration has $\mathcal{O}(1)$ number of gradient and proximal mapping evaluations, the order of computational complexity is also $\mathcal{O}(\kappa^{\frac{11}{6}}\epsilon^{-2})$. 
\end{proof}

\section{Deriviation of computational complexities in Table \ref{table1}} \label{supp: table}

In this section, we will derive some computational complexities in Table \ref{table1} that are not directly shown in their corresponding papers. Note that all these GDA-type algorithms in Table \ref{table1} are single-loop. Hence, the computational complexity (the number of gradient evaluations) has the order of $\mathcal{O}(T)$ where $T$ is the number of iterations. 

First, the papers in Table \ref{table1} use different convergence measures for computational complexity. Specifically, \cite{chen2021,huang2021efficient} and our work show computational complexity to achieve $\|G(x)\|\le\epsilon$ where the proximal gradient mapping $G$ is defined in \eqref{Phi_pg}.  \cite{lin2019gradient,boct2020alternating} use the measure $\min_t \text{dist}\big(\Phi(x_t)+\partial g(x_t),\mathbf{0}\big)\le\epsilon$ where $\Phi(x):=\max_{y\in\mathcal{Y}} f(x,y)-h(y)$, $\partial g$ denotes the partial gradient of $g$ and $\text{dist}\big(A,\mathbf{0}\big)$ denotes the distance between $\mathbf{0}$ and any set $A$. \cite{xu2020unified} has no regularizers $g$, $h$ and uses the convergence measure $\min_t\|\nabla f(x_t,y_t)\|\le\epsilon$ when $y\in\mathbb{R}^d$ is unconstrained, which does not necessarily yield the desired approximate critical point of $\Phi$.

\subsection{Derivation of complexity in \cite{chen2021}}
In \cite{chen2021}, Proposition 2 states that the Lyapunov function $H(z_t):= \Phi(x_t) + g(x_t) + {(1-\frac{1}{4\kappa^2})}\|y_t-y^*(x_t)\|^2$ where $z_t:=(x_t,y_t)$ are generated by GDA decreases at the following rate. 
\begin{align}
    H(z_{t+1}) \le H(z_t) - 2\|x_{t+1}-x_t\|^2 -\frac{1}{4\kappa^2}\big(\|y_{t+1}-y^*(x_{t+1})\|^2+\|y_{t}-y^*(x_{t})\|^2\big). \label{eq: Hdec_GDA}
\end{align}

Note that for GDA, the gradient mapping \eqref{Phi_pg} has the following norm bound
\begin{align}
	\|G(x_{t+1})\|=&\frac{1}{\eta_x} \big\|x_{t+1}-\text{prox}_{\eta_x g}\big(x_{t+1}-\eta_x\nabla\Phi(x_{t+1})\big)\big\|\nonumber\\
	\overset{(i)}{\le}& \frac{1}{\eta_x} \big\|x_{t+1}-x_t+\eta_x\big[\nabla_1 f(x_t,y_t)-\nabla f_1\big(x_{t+1},y^*(x_{t+1})\big)\big]\big\|\nonumber\\
	\le& \Big(\frac{1}{\eta_x}+L\Big)\|x_{t+1}-x_t\|+L\|y^*(x_{t+1})-y^*(x_t)\|+L\|y^*(x_t)-y_t\|\nonumber\\
	\overset{(ii)}{\le}& \Big(\frac{1}{\eta_x}+L+L\kappa\Big)\|x_{t+1}-x_t\|+L\|y^*(x_t)-y_t\|\, \nonumber
\end{align}
where (i) uses the GDA update rule $x_{t+1} \in \prox{\eta_x g}\big(x_t - \eta_{x}\nabla_1 f(x_t,y_t)\big)$, the expression $\nabla\Phi(x)=\nabla f_1\big(x,y^*(x)\big)$ in Proposition \ref{prop_Phiystar} and the non-expansiveness of proximal mapping, (ii) uses the property that $y^*$ is $\kappa$-Lipschitz continuous Proposition \ref{prop_Phiystar}. Hence, we obtain the following convergence rate. 
\begin{align}
    &\min_{0\le t\le T}\|G(x_t)\|^2 \le \frac{1}{T}\sum_{t=0}^{T-1}\|G(x_{t+1})\|^2 \nonumber\\
    &\le \frac{1}{T}\sum_{t=0}^{T-1}\Big[2\Big(\frac{1}{\eta_x}+L+L\kappa\Big)^2\|x_{t+1}-x_t\|^2 + 2L^2\|y^*(x_t)-y_t\|^2\Big] \nonumber\\
    &\stackrel{(i)}{\le} \frac{1}{T}\sum_{t=0}^{T-1}\Big[\mathcal{O}(\kappa^6)\|x_{t+1}-x_t\|^2 + 2L^2\|y^*(x_t)-y_t\|^2\Big] \nonumber\\
    &\le \frac{\mathcal{O}(\kappa^6)}{T} \sum_{t=0}^{T-1}\Big(2\|x_{t+1}-x_t\|^2+\frac{1}{4\kappa^2}\|y_{t+1}-y^*(x_{t+1})\|^2\Big) \nonumber\\
    &\stackrel{(ii)}{\le}\frac{\mathcal{O}(\kappa^6)}{T} \sum_{t=0}^{T-1}\big(H(z_t)-H(z_{t+1})\big)\nonumber\\
    &\le \frac{\mathcal{O}(\kappa^6)}{T}\big(H(z_0)-H(z_T)\big)
\end{align}
where (i) uses the maximum possible stepsize $\eta_x=\frac{1}{\kappa^{3}(L+3)^{2}}=\mathcal{O}(\kappa^{-3})$ in \cite{chen2021}. Therefore, To let $\min_{0\le t\le T}\|G(x_t)\|\le \epsilon$, the computational complexity has the order $T=\mathcal{O}(\kappa^6\epsilon^{-2})$.

\subsection{Derivation of complexity in \cite{huang2021efficient}}
The mirror descent ascent algorithm (Algorithm 1) in \cite{huang2021efficient} updates the variables $x$ and $y$ simultaneously using proximal mirror descent and momentum accelerated mirror ascent steps respectively. Specifically, using the Bregman functions $\psi_t(x):=\frac{1}{2}\|x\|^2$ and $\phi_t(y):=\frac{1}{2}\|y\|^2$ which are both $\rho=1$-strongly convex, this algorithm becomes proximal GDA with momentum on $y$ variable.

Substituting $\rho=1$ into Theorem 2 that provides convergence rate under deterministic minimax optimization (i.e., there are no stochastic samples in the objective function), we obtain the following hyperparameter choices $\eta=\mathcal{O}(1)$, $L=L_f(1+\kappa)=\mathcal{O}(L_f\kappa)$ \footnote{$L_f$ in \cite{huang2021efficient} has the same meaning as our $L$, the Lipschitz parameter of $\nabla f$.}, $\lambda=\mathcal{O}(L_f^{-1})$, $\gamma=\mathcal{O}\big[\min\big(L^{-1},\frac{\mu/L_f}{\kappa^2},\frac{\mu/L_f}{L_f^2}\big)\big]=\mathcal{O}\big[\min(L_f^{-1}\kappa^{-1},\kappa^{-3},L_f^{-2}\kappa^{-1})\big]=\mathcal{O}(\kappa^{-3})$ (Without loss of generality, we assume $\mu\le 1$ which implies that $\kappa=L_f/\mu \ge L_f$). Then the convergence rate (25) becomes 
\begin{align}
    \min_{1\le t\le T}\|G(x_t)\|&\le \frac{1}{T}\sum_{t=1}^T\|\mathcal{G}_t\| \le \mathcal{O}\Bigg(\frac{\sqrt{\widetilde{F}(x_1)-F^*}+\Delta_1}{\sqrt{T\gamma\rho}}\Bigg)\nonumber\\
    &\stackrel{(i)}{=} \mathcal{O}\Bigg(\frac{\sqrt{\Phi(x_1)+g(x_1)-\inf_{x}\big(\Phi(x)+g(x)\big)}+\|y_1-y^*(x_1)\|}{\sqrt{T\kappa^{-3}}}\Bigg)\nonumber\\
\end{align}
where (i) uses the notations in \cite{huang2021efficient} that $\mathcal{G}_t=G(x_t)$, $\widetilde{F}(x)=\Phi(x)+g(x)$, $\Delta_1=\|y_1-y^*(x_1)\|$ and the above hyperparameter choices. Hence, to achieve $\min_{1\le t\le T}\|\mathcal{G}_t\|\le \epsilon$, the required computation complexity is $T\ge \mathcal{O}\Big(\kappa^3\epsilon^{-2}\big(\Phi(x_1)+g(x_1)-\inf_{x}(\Phi(x)+g(x)) + \|y_1-y^*(x_1)\|^2\big)\Big)$. In Table \ref{table1}, we only keep the dependence of $T(\epsilon)$ on $\epsilon\approx 0$ and $\kappa\gg 1$, which yields $\mathcal{O}\big(\kappa^3\epsilon^{-2}\big)$. 

\subsection{Derivation of complexity in \cite{xu2020unified}}

\cite{xu2020unified} aims to solve the following minimax optimization
\begin{align}
	\min_{x\in \mathcal{X}}\max_{y\in \mathcal{Y}}~ f(x,y). \nonumber
\end{align} 
where $\mathcal{X}$ and $\mathcal{Y}$ are nonempty closed convex sets and $\mathcal{Y}$ is also compact. The following AltGDA algorithm with projection mappings $\mathcal{P}_{\mathcal{X}}$ and $\mathcal{P}_{\mathcal{Y}}$ is analyzed for nonconvex-strongly concave geometry where $f$ is $L$-smooth\footnote{In Assumption 2.1 of \cite{xu2020unified}, let all the Lipschitz smooth parameters $L_x=L_y=L_{12}=L_{21}:=L$ for simplicity.} and $f(\cdot,y)$ is $\mu$-strongly concave for any $y\in\mathcal{Y}$.
\begin{align}
\left\{ \begin{gathered}
    x_{k+1}=\mathcal{P}_{\mathcal{X}}\big(x_k-\eta^{-1}\nabla_x f(x_k,y_k)\big) \hfill \\
    y_{k+1}=\mathcal{P}_{\mathcal{Y}}\big(y_{k}+\rho \nabla_{y} f(x_{k+1}, y_{k})\big) \hfill \\ 
\end{gathered}  \right.
\end{align}

Using the largest possible stepsizes $\eta^{-1}=\mathcal{O}(L^{-1}\kappa^{-3})$, $\rho=\mathcal{O}(\mu L^{-2})=\mathcal{O}(L^{-1}\kappa^{-1})$ that satisfies eq. (3.18), the following key variables in Theorem 3.1 can be computed as follows.
\begin{align}
    d_1=&\mathcal{O}(L^{-1}\kappa^{-5}) \nonumber\\
    F_1-\overline{F}=&f(x_1,y_1)-\min_{x\in\mathcal{X},y\in\mathcal{Y}} f(x,y) +\mathcal{O}(L\kappa^3\sigma_y^2) \nonumber
\end{align}
where $\sigma_y$ is the diameter of the compact set $\mathcal{Y}$. Hence the number of iterations (also the order of the computation complexity) required to achieve $\|\nabla_x f(x_k,y_k)\|\le \epsilon$, $\frac{1}{\rho}\big\|y_k-\mathcal{P}_{\mathcal{Y}}\big(y_k+\rho\nabla_y f(x_k,y_k)\big)\big\|\le \epsilon$ (note that this does not necessarily yields approximate critical point of $\Phi(x):=\max_{y\in\mathcal{Y}}f(x,y)$) is
\begin{align}
    T(\epsilon)=\mathcal{O}\Big(\frac{F_1-\underline{F}}{d_1\epsilon^2}\Big)=\mathcal{O}\Big(\epsilon^{-2}L\kappa^5\big(f_1-\min_{x\in\mathcal{X},y\in\mathcal{Y}} f(x,y)+32L\kappa^3\sigma_y^2\big)\Big). \nonumber
\end{align}
In Table \ref{table1}, we only keep the dependence of $T(\epsilon)$ on $\epsilon\approx 0$ and $\kappa\gg 1$, which yields $\mathcal{O}\big(\kappa^5\epsilon^{-2}\big)$.

\end{document}